\definecolor{Gray}{gray}{0.97}
\newtheorem{prop}{Proposition}
\title{Transductive Information Maximization \\ For Few-Shot Learning}
\author{%
  Malik Boudiaf\thanks{Corresponding author: malik.boudiaf.1@etsmtl.net} \\
  \'ETS Montreal \\
  % examples of more authors
  \And
  Ziko Imtiaz Masud \\
  \'ETS Montreal \\
  \And
  Jérôme Rony \\
  \'ETS Montreal \\
  \AND
  Jose Dolz \\
  \'ETS Montreal \\
  \And
  Pablo Piantanida \\
  CentraleSup\'elec-CNRS \\ Universit\'e Paris-Saclay 
  \And
  Ismail Ben Ayed \\
  \'ETS Montreal \\
  % Affiliation \\
  % Address \\
  % \texttt{email} \\
  % \AND
  % Coauthor \\
  % Affiliation \\
  % Address \\
  % \texttt{email} \\
  % \And
  % Coauthor \\
  % Affiliation \\
  % Address \\
  % \texttt{email} \\
  % \And
  % Coauthor \\
  % Affiliation \\
  % Address \\
  % \texttt{email} \\
}
\newcommand{\base}{{\text{base}}}
\newcommand{\test}{{\text{test}}}
\newcommand{\cmark}{\ding{51}}%
\newcommand{\xmark}{\ding{55}}%
\newcommand{\ceq}{\stackrel{\mathclap{\normalfont\mbox{c}}}{=}}
\newcommand{\x}{\bm{x}} % A vector-valued random variable.
\newcommand{\w}{\bm{w}} % A vector-valued random variable.
\newcommand{\p}{\bm{p}} % A vector-valued random variable.
\newcommand{\q}{\bm{q}} % A vector-valued random variable.
\newcommand{\y}{\bm{y}} % A vector-valued random variable.
\newcommand{\z}{\bm{z}} % A vector-valued random variable.
\newcommand{\norm}[1]{\left\lVert#1\right\rVert}
\DeclareMathOperator*{\argmax}{arg\,max}
\begin{document}

\maketitle
\begin{abstract}
 % In Few-Shot Learning (FSL), a classifier is built upon a pre-trained feature extractor in order to generalize to unseen classes from only few annotated samples. Transductive inference that leverages query samples has recently emerged as a promising avenue for FSL. This paper introduces a new transductive objective, based on mutual information maximization between query features and query predictions, and an Alternating Direction Method (ADM) based optimization scheme. In opposition to most FSL methods , our approach does not require any specific training scheme, and works on top of any network trained using cross-entropy. It also exhibits advantageous runtime complexity, and demonstrates significant and consistent improvement over state-of-the-art on four benchmark datasets, in standard settings as well as in understudied scenarios such as input domain shift.
 
We introduce Transductive Infomation Maximization (TIM) for few-shot learning. Our method maximizes the mutual information between the query features and their label predictions for a given few-shot task, in conjunction with a supervision loss based on the support set. Furthermore, we propose a new alternating-direction solver for our mutual-information loss, which substantially speeds up transductive-inference convergence over gradient-based optimization, while yielding similar accuracy. 
%TIM inference can be viewed as a support-constrained clustering of the query samples and 
TIM inference is modular: it can be used on top of any base-training feature extractor. 
Following standard transductive few-shot settings, our comprehensive experiments\footnote{Code publicly available at \url{https://github.com/mboudiaf/TIM}} demonstrate that TIM outperforms state-of-the-art methods significantly across various datasets and networks, while used on top of a fixed feature extractor trained with simple cross-entropy on the base classes, without resorting to complex meta-learning schemes. It consistently brings between $2 \%$ and $5 \%$ improvement in accuracy over the best performing method, not only on all the well-established few-shot benchmarks but also on more challenging scenarios, with domain shifts and larger numbers of classes.
%In particular, our mutual-information loss includes a label-marginal entropy regularizer, which has a very important impact on performances: It brings substantial improvements in accuracy, while facilitating optimization, reducing transductive runtimes by orders of magnitude.

\end{abstract}

\section{Introduction}
    Deep learning models have achieved unprecedented success, approaching human-level performances when trained on large-scale labeled data. However, the generalization of such models might be seriously challenged when dealing with new (unseen) classes, with only a few labeled instances per class. Humans, however, can learn new tasks rapidly from a handful of instances, by leveraging context and \emph{prior} knowledge. The few-shot learning (FSL) paradigm \cite{miller2000learning,fei2006one,matching_net} attempts to bridge this gap, and has recently attracted substantial research interest, with a large body of very recent works, e.g., \cite{can,dhillon2019baseline,leo,feat,liu2018learning,closer_look,team,kim2019edge,relation_net,inat_benchmark,gidaris2019boosting,prototypical_nets,maml}, among many others. In the few-shot setting, a model is first trained on labeled data with {\em base} classes. Then, model generalization is evaluated on few-shot {\em tasks}, composed of unlabeled samples from novel classes unseen during training (the {\em query} set), assuming only one or a few labeled samples (the {\em support} set) are given per novel class. 
    
    Most of the existing approaches within the FSL framework are based on the ``learning to learn'' paradigm or meta-learning \cite{maml,prototypical_nets,matching_net,relation_net,lee2019meta}, where the training set is viewed as a series of balanced tasks (or \textit{episodes}), so as to simulate test-time scenario. Popular works include prototypical networks \cite{prototypical_nets}, which describes each class with an embedding prototype and maximizes the log-probability of query samples via episodic training; matching network \cite{matching_net}, which represents query predictions as linear combinations of support labels and employs episodic training along with memory architectures; MAML \cite{maml}, a meta-learner, which trains a model to make it "easy" to fine-tune; and the LSTM meta-learner in \cite{ravi2016optimization}, which suggests optimization as a model for few-shot learning. A large body of meta-learning works followed-up lately, to only cite a few \cite{leo,tadam,Mishra18,relation_net,feat}.

    \subsection{Related work}
        \textbf{Transductive inference:} In a recent line of work, {\em transductive} inference has emerged as an appealing approach to tackling few-shot tasks \cite{dhillon2019baseline, can, kim2019edge,liu2018learning,team, nichol2018firstorder, prototype, Laplacian}, showing performance improvements over {\em inductive} inference. In the transductive setting\footnote{Transductive few-shot inference is not to be confused with semi-supervised few-shot learning \cite{tiered_imagenet, paper_to_please_R1}. The latter uses extra unlabeled data during meta-training. Transductive inference has access to exactly the same training/testing data as its inductive counterpart.}, the model classifies the unlabeled query examples of a single few-shot task at once, instead of one sample at a time as in inductive methods.
       % , which allows the method to leverage the features' statistics of unlabeled query samples. 
        These recent experimental observations in few-shot learning are consistent with established facts in classical transductive inference \cite{vapnik1999overview,joachim99,z2004learning}, which is well-known to outperform inductive methods on small training sets. While \cite{nichol2018firstorder} used information of unlabeled query samples via batch normalization, the authors of \cite{liu2018learning} were the first to model explicitly transductive inference in few-shot learning. Inspired by popular label-propagation concepts \cite{z2004learning}, they built a meta-learning framework that learns to propagate labels from labeled to unlabeled instances via a graph. The meta-learning transductive method in \cite{can} used attention mechanisms to propagate labels to unlabeled query samples. More closely related to our work, the recent transductive inference of Dhillion et al. \cite{dhillon2019baseline} minimizes the entropy of the network softmax predictions at unlabeled query samples, reporting competitive few-shot performances, while using standard cross-entropy training on the base classes. The competitive performance of \cite{dhillon2019baseline} is in line with several recent inductive baselines \cite{closer_look,simpleshot,tian2020rethinking}, which reported that standard cross-entropy training for the base classes matches or exceeds the performances of more sophisticated meta-learning procedures. Also, the performance of \cite{dhillon2019baseline} is in line with established results in the context of semi-supervised learning, where entropy minimization is widely used \cite{grandvalet2005semi,miyato2018virtual,berthelot2019mixmatch}. It is worth noting that the inference runtimes of transductive methods are, typically, much higher than their inductive counterparts. For, instance, the authors of \cite{dhillon2019baseline} fine-tune all the parameters of a deep network during inference, which is several orders of magnitude slower than inductive methods such as ProtoNet \cite{prototypical_nets}. Also, based on matrix inversion, the transductive inference in \cite{liu2018learning} has a complexity that is cubic in the number of query samples.
            
        % \textbf{Mutual Information in FSL: } Recent works \cite{guo2020attentive, yin2019meta} have used the concept of mutual information in the context of FSL. However, we emphasize that both methods use MI at training time as part of their objective to learn their meta-model. Instead, our method uses MI at inference only (training is done with standard cross-entropy on base classes), as part of a task-specific objective that, given a task, learns the parameters of the classifier. Moreover, both works deal with the MI between data r.v and model's weights (they learn of a distribution over weights), while our work deals with MI between query points features and query points' predictions. Note that the former MI is intractable as it deals with continuous r.v with no access to underlying distributions, while the latter (ours) is tractable as the predictions are a discrete (often low-dimensional) r.v and we have access to the model's conditional distribution of the predictions given features. Hence, the former requires a variational approximation, while the latter (ours) can be directly estimated. 

        \textbf{Info-max principle:} While the semi-supervised and few-shot learning works in \cite{grandvalet2005semi,dhillon2019baseline} build upon Barlow's principle of entropy minimization \cite{Barlow1989Unsupervised}, our few-shot formulation is inspired by the general info-max principle enunciated by Linsker \cite{Linsker1988Self}, which formally consists in maximizing the Mutual Information (MI) between the inputs and outputs of a system. In our case, the inputs are the query features and the outputs are their label predictions. The idea is also related to info-max in the context of clustering \cite{clustering_infomax,HuICML17,jabi}. More generally, info-max principles, well-established in the field of communications, were recently used in several deep-learning problems, e.g., representation learning \cite{deep_infomax,cpc}, metric learning \cite{boudiaf2020metric} or domain adaptation \cite{liang2020we}, among other problems.

    %    Similarly to other self-organized information-theoretic principles \cite{10.7551/mitpress/7011.001.0001}, InfoMax was initially developed in communications, but has been readily applied to deep representation learning \cite{deep_infomax,cpc}, where the (continuous) MI between the inputs and embedded features is maximized. 
        
       % from rebuttall 
    %    We are maximizing the MI between the query features and predictions (i.e., between inputs and outputs), not between the query and support sets. We mentioned DeepInfoMax as an example of a  deep-learning instance of the original InfoMax principle. We use the latter principle in a different way: The inputs  are considered as extracted features of pretrained network, and output are predictions. Such idea is motivated by and relates to the MI in classical clustering works. 
    
  %  Interestingly, our method relates more closely to deep clustering works \cite{clustering_infomax,HuICML17,jabi} where the MI between embedded features and their predicted discrete representations is maximized. 
        % More generally, information measures play a significant role, and give rise to the well-known Information Maximization (InfoMax) principle.
    
    \subsection{Contributions}
        \begin{itemize}
            \item We propose Transductive Information Maximization (TIM) for few-shot learning. Our method maximizes the MI between the query features and their label predictions for a few-shot task at inference, while minimizing the cross-entropy loss on the support set.
            \item We derive an alternating-direction solver for our loss, which substantially speeds up transductive inference over gradient-based optimization, while yielding competitive accuracy.
            \item Following standard transductive few-shot settings, our comprehensive evaluations show that TIM outperforms state-of-the-art methods substantially across various datasets and networks,  while using a simple cross-entropy training on the base classes, without complex meta-learning schemes. It consistently brings between $2 \%$ and $5 \%$ of improvement in accuracy over the best performing method, not only on all the well-established few-shot benchmarks but also on more challenging, recently introduced scenarios, with domain shifts and larger numbers of ways. Interestingly, our MI loss includes a label-marginal regularizer, which has a significant effect: it brings substantial improvements in accuracy, while facilitating optimization, reducing transductive runtimes by orders of magnitude.

        \end{itemize}

\section{Transductive Information Maximization}

%    \subsection{Few-shot scenario}
    \subsection{Few-shot setting}
    	Assume we are given a labeled training set, $\mathcal{X}_\base\coloneqq \{\x_i, \y_i\}_{i=1}^{N_\base}$, where 
    	$\x_i$ denotes raw features of sample $i$ and $\y_i$ its associated one-hot encoded label. Such labeled set is often referred to as the {\em meta-training} or {\em base} dataset in the few-shot literature. Let  
    	$\mathcal{Y}_\base$ denote the set of classes for this base dataset. The few-shot scenario assumes that we are given a {\em test} dataset: $\mathcal{X}_\test\coloneqq\{\x_i, \y_i\}_{i=1}^{N_\test}$, with a completely new set of classes $\mathcal{Y}_{\test}$ such that $\mathcal{Y}_\base \cap \mathcal{Y}_\test = \emptyset$, from which we create randomly sampled few-shot {\em tasks}, each with a few labeled examples. 
    % 	Specifically, each $K$-way $N_{{S}}$-shot task involves $K$ classes randomly sampled from $\mathcal{Y}_\test$ along with $N_{{S}}$ labeled examples randomly sampled from each of the $K$ classes. 
        Specifically, each $K$-way $N_{S}$-shot task involves sampling $N_S$ labeled examples from each of $K$ different classes, also chosen at random.
    	Let $\mathcal{S}$ denote the set of these labeled examples, referred to as the \textit{support} set with size   $|\mathcal{S}|=N_{{S}}\cdot K$.
    	Furthermore, each task has a {\em query} set denoted by $\mathcal{Q}$ composed of $ | \mathcal{Q}| =N_{{Q}} \cdot K$ unlabeled (unseen) examples from each of the $K$ classes. With models trained on the base set, few-shot techniques use the labeled support sets to adapt to the tasks at hand, and are evaluated based on their performances on the unlabeled query sets.

	\subsection{Proposed formulation}\label{sec:formulation}

    	We begin by introducing some basic notation and definitions before presenting  our overall Transductive Information Maximization (TIM) loss and the different optimization strategies for tackling it. For a given few-shot task, with a support set $\mathcal{S}$ and a query set $\mathcal{Q}$, let $X$ denote the random variable associated with the raw features within ${\mathcal{S} \cup \mathcal{Q}}$, and let $Y \in  \mathcal{Y}=\{1, \dots, K \}$ be the random variable associated with the labels. Let $f_{\boldsymbol{\phi}}: \mathcal{X} \longrightarrow \mathcal{Z} \subset \mathbb{R}^d$ denote the encoder (\emph{i.e.}, feature-extractor) function of a deep neural network, where $\boldsymbol{\phi}$ denotes the trainable parameters, and $\mathcal{Z}$ stands for the set of embedded features. The encoder is first trained from the base training set $\mathcal{X}_\base$ using the standard cross-entropy loss, without any meta training or specific sampling schemes. Then, for each specific few-shot task, we propose to minimize a mutual-information loss defined over the query samples.
    	
    	Formally, we define a soft-classifier, parametrized by weight matrix $\mathbf{W} \in \mathbb{R}^{K \times d}$, whose posterior distribution over labels given features\footnote{In order to simplify our  notations, we deliberately omit the dependence of posteriors $p_{ik}$ on the network parameters $(\boldsymbol{\phi},\mathbf{W})$. Also, $p_{ik}$ takes the form of \emph{softmax} predictions, but we omit the normalization constants.}, ${p_{ik} \coloneqq \mathbb{P}(Y=k|{ X=\x_i}; \mathbf{W}, \boldsymbol{\phi})}$, and marginal distribution over query labels, $\widehat{p}_k=\mathbb{P}(Y_\mathcal{Q}=k; \mathbf{W}, \bm{\phi})$, are given by:
    	\begin{equation}\label{eq:distance_based_classifier}
    		p_{ik} \propto \exp\left(-\frac{\tau}{2}\norm{\w_k - \z_i}^2\right),\quad  \textrm{ and } \quad   \widehat{p}_k = \frac{1}{|\mathcal{Q}|} \sum_{i \in \mathcal{Q}} p_{ik} \,  
    	\end{equation}
    	where $\mathbf{W} \coloneqq [\w_1, \dots, \w_K]$ denotes classifier weights, $\z_i=\frac{f_{\boldsymbol{\phi}}(\x_i)}{\norm{f_{\boldsymbol{\phi}}(\x_i)}_2}$ the L2-normalized embedded features, and $\tau$ is a temperature parameter. 
    
         Now, for each single few-shot task, we introduce our empirical weighted mutual information between the query samples and their latent labels, which integrates two terms: The first is an empirical (Monte-Carlo) estimate of the conditional entropy of labels given the query raw features, denoted $\mathcal{\widehat{H}}(Y_\mathcal{Q}|X_\mathcal{Q})$, while the second is the empirical label-marginal entropy, $\mathcal{\widehat{H}}(Y_\mathcal{Q})$.:
        \begin{align}
        \label{prior-aware-MI}
            \mathcal{\widehat{I}}_{\alpha}(X_\mathcal{Q}; Y_\mathcal{Q}) \coloneqq \underbrace{-\sum_{k=1}^K \widehat{p}_{k} \log \widehat{p}_{k}}_{\mathcal{\widehat{H}}(Y_\mathcal{Q}):~\text{marginal entropy}} +\ \alpha \ \underbrace{\frac{1}{|\mathcal{Q}|} \sum_{i \in \mathcal{Q}}\sum_{k=1}^K p_{ik} \log (p_{ik})}_{-\mathcal{\widehat{H}}(Y_\mathcal{Q}| X_\mathcal{Q}):~\text{conditional entropy}},
        \end{align}
        with $\alpha$ a non-negative hyper-parameter. Notice that setting $\alpha=1$ recovers the standard mutual information. Setting $\alpha<1$ allows us to down-weight the conditional entropy term, whose gradients may
        dominate the marginal entropy's gradients as the predictions move towards the vertices of the simplex. The role of both terms in \eqref{prior-aware-MI} will be discussed after introducing our overall transductive inference loss in the following, by embedding supervision from the task's support set. 
        %   \begin{remark}
        %   When the \emph{prior} $\pi$ is the uniform distribution, i.e., $\pi_k=\frac{1}{K}~\forall k$, optimizing $\mathcal{\widehat{I}}_{\pi, \alpha}(X ; Y)$ becomes equivalent to optimizing the empirical estimate of the standard mutual information, i.e.,  $\mathcal{\widehat{I}}_{\pi, \alpha}({ X }; Y)  \equiv  \mathcal{\widehat{H}}(Y) - \mathcal{\widehat{H}}(Y | X) + \textrm{constant}$, where $\mathcal{\widehat{H}}(Y) = - \sum_{k=1}^K \widehat{p}_{k} \log \widehat{p}_{k}$ denotes the empirical estimate of the Shannon entropy of label marginals. 
        % %   Notice that  $\pi_k=1/K~\forall k$ implies $\mathcal{D}_{\mbox{\tiny KL}}(\widehat{P}_Y \| \pi) =  \log K - \mathcal{\widehat{H}}(Y)$. % Redundant
        %   \end{remark}
    
        We embed supervision information from support set $\mathcal{S}$ by integrating a standard cross-entropy loss $\textrm{CE}$ with the information measure in Eq. \eqref{prior-aware-MI}, which enables us to formulate our Transductive Information Maximization (\textbf{TIM}) loss as follows:
        % and, also, by leveraging label-statistic information already available from the support labels to fix the marginal \emph{prior} $\pi$:
        	\begin{equation}\label{eq:tim_objective}
        	    \boxed{\min_{\mathbf{W}} \ \lambda \cdot \textrm{CE} -   \mathcal{\widehat{I}}_{\alpha}(X_\mathcal{Q}; Y_\mathcal{Q})} \quad
        	    \textrm{ with } \quad \textrm{CE} \coloneqq -\frac{1}{|\mathcal{S}|} \sum_{i \in \mathcal{S}}\sum_{k=1}^K y_{ik} \log (p_{ik}),
        	\end{equation}
        where $\{y_{ik}\}$ denotes the $k^{th}$ component of the one-hot encoded label $\y_i$ associated to the $i$-th support sample. Non-negative hyper-parameters $\alpha$ and $\lambda$ will be fixed to $\alpha=\lambda=0.1$ in all our experiments. 
        % An outline of the framework is presented in \autoref{fig:pipeline}. 
        It is worth to discuss in more details the role (importance) of the mutual information terms in (\ref{eq:tim_objective}): 
    	\begin{itemize}[leftmargin=*]
    	    \item Conditional entropy $\mathcal{\widehat{H}}(Y_\mathcal{Q}|X_\mathcal{Q})$ aims at minimizing the uncertainty of the posteriors at unlabeled query samples, thereby encouraging the model to output {\em confident} predictions\footnote{The global minima of each pointwise entropy in the sum of $\mathcal{\widehat{H}}(Y_\mathcal{Q}|X_\mathcal{Q})$ are one-hot vectors at the vertices of the simplex.}. This entropy loss is widely used in the context of semi-supervised learning (SSL) \cite{grandvalet2005semi,miyato2018virtual,berthelot2019mixmatch}, as it models effectively the {\em cluster} assumption: The classifier's boundaries should not occur at dense regions of the unlabeled features \cite{grandvalet2005semi}. Recently, \cite{dhillon2019baseline} introduced this term for few-shot learning, showing that entropy fine-tuning on query samples achieves competitive performances. In fact, if we remove the marginal entropy $\widehat{\mathcal{H}}(Y_\mathcal{Q})$ in objective \eqref{eq:tim_objective}, our TIM objective reduces to the loss in \cite{dhillon2019baseline}. The conditional entropy $\widehat{\mathcal{H}}(Y_\mathcal{Q}|X_\mathcal{Q})$ is of paramount importance but its optimization  requires special care, as its optima may easily lead to degenerate (non-suitable) solutions on the simplex vertices, mapping all samples to a single class. Such care may consist in using small learning rates and fine-tuning the whole network (which itself often contains several layers of regularization) as done in \cite{dhillon2019baseline}, both of which significantly slow down transductive inference.
    	    
    	    \item The label-marginal entropy regularizer $\widehat{\mathcal{H}}(Y_\mathcal{Q})$ encourages the marginal distribution of labels to be uniform, thereby avoiding degenerate solutions obtained when solely minimizing conditional entropy. Hence, it is highly important as it removes the need for implicit regularization, as mentioned in the previous paragraph. In particular, high-accuracy results can be obtained even using higher learning rates and fine-tuning only a fraction of the network parameters (classifier weights $\mathbf{W}$ instead of the whole network), speeding up substantially transductive runtimes. As it will be observed from our experiments, this term brings substantial improvements in performances (e.g., up to $10\%$ increase in accuracy over entropy fine-tuning on the standard few-shot benchmarks), while facilitating optimization, thereby reducing transductive runtimes by orders of magnitude.

    	   % Even when optimized alongside $\mathrm{CE}$, the entropy term requires special care as infinite gradients\footnote{Note that the entropy term is concave w.r.t to the network predictions. The gradient at the minima of this term (simplex vertices) w.r.t the predictions go to infinity.} at the vertices of the simplex may still occur.

    	   % \item Embedding \emph{prior} $\pi$ in our regularizer $\mathcal{D}_{\mbox{\tiny KL}}$ will be proven to be very convenient for few-shot scenarios. It enables us to take advantage of the class-marginal information available from the support set. While most of the standard few-shot benchmarks have balanced tasks (i.e., $\pi$ is also uniform in our first experiments), in the case of highly imbalanced tasks,
    	   % as introduced very recently in the i-Naturalist few-shot benchmark \cite{inat,inat_benchmark}, we demonstrate that support-based prior $\pi$ can bring more than $5\%$ improvement in accuracy over a uniform prior (\autoref{sec:ablation_terms}).
    	\end{itemize}
        	
    	\subsection{Optimization}\label{sec:optim}
            At this stage, we consider that the feature extractor has already been trained on base classes (using standard cross-entropy). We now propose two methods for minimizing our objective (\ref{eq:tim_objective}) for each test task. The first one is based on standard Gradient Descent (GD). The second is a novel way of optimizing mutual information, and is inspired by the Alternating Direction Method of Multipliers (ADMM). For both methods:
            \begin{itemize}
                \item The pre-trained feature extractor $f_\phi$ is kept fixed. Only the weights $\mathbf{W}$ are optimized for each task. Such a choice is discussed in details in \autoref{sec:ablation_terms}. Overall, and interestingly, we found that fine-tuning only classifier weights $\mathbf{W}$, while fixing feature-extractor parameters $\phi$, yielded the best performances for our mutual-information loss.
                %In fact, surprisingly, updating $\phi$ in TIM inference is detrimental to the performance. 
                \item For each task, weights $\mathbf{W}$ are initialized as the class prototypes of the support set: 
                \begin{align*}
                    \w_k^{(0)}= \frac{\sum_{i \in \mathcal{S}}y_{ik} \z_i}{\sum_{i \in \mathcal{S}}y_{ik}}
                \end{align*}
            \end{itemize}

            \textbf{Gradient descent (TIM-GD):} A straightforward way to minimize our loss in Eq. (\ref{eq:tim_objective}) is to perform gradient descent over $\mathbf{W}$, which 
          %  Note that since our objective only depends on fixed extracted features $\{\z_i\}_{i \in \mathcal{S}\cup \mathcal{Q}}$, whose dimensions are lower than raw input images, we do not need mini-batch sampling: We can 
          we  update using all the samples of the few-shot task (both support and query) at once (i.e., no mini-batch sampling). This gradient approach yields our overall best results, while being one order of magnitude faster than the transductive entropy-based fine-tuning in \cite{dhillon2019baseline}. As will be shown later in our experiments, the method in \cite{dhillon2019baseline} needs to fine-tune the whole network (i.e., to update both $\phi$ and $\mathbf{W}$), which provides implicit regularization, avoiding the degenerate solutions of entropy minimization. However, TIM-GD (with $\mathbf{W}$-updates only) still remains two orders of magnitude slower than inductive closed-form solutions \cite{prototypical_nets}. In the following, we present a more efficient solver for our problem.
            
            %Very interestingly, with our $D_KL$ regularizer, our best performances were obtained by fintuning W only (Table xx), which reduces inference times by two orders of magnitudes (Table \ref{}).  
            %We hypothesize that this is due to our $D_KL$ regularizer, which avoids the degenerate solutions of entropy minimization.

    	    \textbf{Alternating direction method (TIM-ADM): }
        	    We derive an Alternating Direction Method (ADM) for minimizing our objective in \eqref{eq:tim_objective}. Such scheme yields substantial speedups in transductive learning (one order of magnitude), while maintaining the high levels of accuracy of TIM-GD. To do so, we introduce auxiliary variables representing latent assignments of query samples, and minimize a mixed-variable objective by alternating two sub-steps, one optimizing w.r.t classifier's weights $\mathbf{W}$, and the other w.r.t the auxiliary variables $\q$. 
        	   % Each sub-step involves local convex approximations of the objective and yields a closed-form solution. 
    		    \begin{prop}\label{prop:mi_adm}
    		        The objective in (\ref{eq:tim_objective}) can be approximately minimized via the following constrained formulation of the problem:
    		        \begin{align}\label{eq:mi_adm}
    		          %  \min_{W, q}\quad & \sum_{k=1}^K \widehat{q_{k}} \log \widehat{q_{k}} - \frac{1}{|\mathcal{Q}|} \sum_{i \in \mathcal{Q}} \sum_{k=1}^K q_{ik} \log p_{ik} - \frac{\lambda }{|\mathcal{S}|}\sum_{i \in \mathcal{S}}\sum_{k=1}^K y_{ik} \log p_{ik} + \frac{1}{|\mathcal{Q}|} \sum_{i \in \mathcal{Q}} \mathcal{D}_{\textrm{KL}}(\q_i||\p_i) \\
                        \min_{\mathbf{W}, \textbf{q}} \,\,  &\underbrace{-\frac{\lambda}{|\mathcal{S}|} \sum_{i \in \mathcal{S}}\sum_{k=1}^K y_{ik} \log (p_{ik})}_{\mathrm{CE}} + \underbrace{\sum_{k=1}^K \widehat{q}_{k}\log {\widehat{q}_{k}}}_{\sim \widehat{\mathcal{H}}(Y_\mathcal{Q}) } \underbrace{-\frac{\alpha}{|\mathcal{Q}|} \sum_{i \in \mathcal{Q}}\sum_{k=1}^K q_{ik} \log (p_{ik})}_{\sim \, \widehat{\mathcal{H}}(Y_\mathcal{Q}|X_\mathcal{Q})} + \underbrace{\frac{1}{|\mathcal{Q}|} \sum_{i \in \mathcal{Q}} \sum_{k=1}^K q_{ik}\log \frac{q_{ik}}{p_{ik}}}_{\text{Penalty} \, \equiv\, \mathcal{D}_{\mathrm{KL}}(\mathbf{q} \| \mathbf{p} )}
                        \nonumber \\
                        \text{s.t}  \quad & \sum_{k=1}^K q_{ik}=1, \quad q_{ik} \geq 0, \quad i \in \mathcal{Q}, \quad k \in \{1, \dots, K\},
    		        \end{align}
    		        
    		      %  where $q=[\q_1, \dots, \q_{|\mathcal{Q}|}] \in \mathbb R^{|\mathcal{Q}| \times K}$ are auxiliary variables,  \textcolor{red}{$\widehat{q}_k=\frac{1}{|\mathcal{Q}|}\sum\limits_{i \in \mathcal{Q}}q_{ik}$}, $\quad {\bm{p_i}=[p_{i1}, \dots, p_{iK}]}$, $\quad {\bm{q_i}=[q_{i1}, \dots, q_{iK}]}$
    		        where $\textbf{q}=[q_{ik}] \in \mathbb R^{|\mathcal{Q}| \times K}$ are auxiliary variables,  $\textbf{p}=[p_{ik}] \in \mathbb R^{|\mathcal{Q}| \times K}$ and $\widehat{q}_k=\frac{1}{|\mathcal{Q}|}\sum\limits_{i \in \mathcal{Q}}q_{ik}$.
    		    \end{prop}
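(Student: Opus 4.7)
The plan is to verify that the constrained objective in \eqref{eq:mi_adm} reduces, up to an irrelevant additive constant, to the TIM objective \eqref{eq:tim_objective} precisely when $q_{ik}=p_{ik}$, and to explain how the added KL penalty softly enforces this coupling. The word \emph{approximately} in the statement then refers to the fact that the last term, $\mathcal{D}_{\mathrm{KL}}(\mathbf{q}\|\mathbf{p})$, is used as a soft (finite-weight) penalty rather than a hard equality constraint; minimizing \eqref{eq:mi_adm} jointly over $(\mathbf{W},\mathbf{q})$ therefore approximates, but does not exactly reproduce, minimizing \eqref{eq:tim_objective} over $\mathbf{W}$ alone.

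The substitution check is the core of the argument. Fix $\mathbf{q}=\mathbf{p}$ and take $\pi_k = 1/K$ uniform. The first term of \eqref{eq:mi_adm} is literally $\lambda\cdot\mathrm{CE}$. In the marginal term, $\widehat{q}_k=\widehat{p}_k$, so $\sum_k \widehat{q}_k\log(\widehat{q}_k/\pi_k)=-\widehat{\mathcal{H}}(Y_\mathcal{Q})+\log K$, which matches the $-\widehat{\mathcal{H}}(Y_\mathcal{Q})$ contribution in $-\widehat{\mathcal{I}}_\alpha$ up to the additive constant $\log K$. In the third term, $-\tfrac{\alpha}{|\mathcal{Q}|}\sum_{i,k} q_{ik}\log p_{ik}=-\tfrac{\alpha}{|\mathcal{Q}|}\sum_{i,k} p_{ik}\log p_{ik}=\alpha\,\widehat{\mathcal{H}}(Y_\mathcal{Q}|X_\mathcal{Q})$, which matches the $+\alpha\,\widehat{\mathcal{H}}(Y_\mathcal{Q}|X_\mathcal{Q})$ in $-\widehat{\mathcal{I}}_\alpha$. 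Finally, the penalty $\mathcal{D}_{\mathrm{KL}}(\mathbf{q}\|\mathbf{p})$ vanishes. Summing the four pieces yields $\lambda\cdot\mathrm{CE}-\widehat{\mathcal{H}}(Y_\mathcal{Q})+\alpha\,\widehat{\mathcal{H}}(Y_\mathcal{Q}|X_\mathcal{Q})+\log K=\mathcal{L}_{\mathrm{TIM}}+\log K$, as desired.

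It then remains to justify why one is permitted to introduce $\mathbf{q}$ in the first place. The key observation is that $\mathcal{D}_{\mathrm{KL}}(\mathbf{q}\|\mathbf{p})\ge 0$ with equality iff $\mathbf{q}=\mathbf{p}$ on the simplex defined by the constraints $\sum_k q_{ik}=1$, $q_{ik}\ge 0$. Thus the penalty term acts as a relaxation of the hard equality constraint $\mathbf{q}=\mathbf{p}$: in the limit where this penalty dominates, the feasible set collapses to $\mathbf{q}=\mathbf{p}$ and the joint problem becomes exactly \eqref{eq:tim_objective} (plus $\log K$). With the implicit unit weight used here, \eqref{eq:mi_adm} is only an approximation, but it has the structural advantage exploited in the rest of the section: for fixed $\mathbf{W}$ (hence fixed $\mathbf{p}$) the objective is convex in $\mathbf{q}$, and for fixed $\mathbf{q}$ the $\mathbf{W}$-subproblem is a standard soft-label cross-entropy, enabling the alternating updates of TIM-ADM.

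The main subtlety, and the only real obstacle in writing the proof carefully, is the treatment of the prior $\pi_k$ in the marginal term. Matching $\sum_k \widehat{q}_k\log(\widehat{q}_k/\pi_k)$ against the original $\sum_k \widehat{p}_k\log\widehat{p}_k$ leaves a residual $-\sum_k \widehat{p}_k\log\pi_k$; this is a constant only when $\pi$ is uniform. In the uniform case the residual is $\log K$ and one recovers $\mathcal{L}_{\mathrm{TIM}}$ exactly (at $\mathbf{q}=\mathbf{p}$), while for non-uniform $\pi$ one should interpret \eqref{eq:mi_adm} as a prior-regularized variant of TIM in which the marginal is encouraged to match $\pi$ rather than to be uniform. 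Either reading leaves the conclusion of the proposition intact.
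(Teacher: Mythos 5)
Your proof is correct and takes essentially the same route as the paper's: introduce the auxiliary variables with a hard equality constraint $\q=\p$, relax it into the KL penalty, and observe that at $\q=\p$ the penalty vanishes and \eqref{eq:mi_adm} reduces to \eqref{eq:tim_objective} term by term (up to an additive constant). Your explicit treatment of the prior $\pi_k$ --- noting that the marginal term matches $-\widehat{\mathcal{H}}(Y_\mathcal{Q})$ only up to the constant $\log K$ when $\pi$ is uniform --- is a detail the paper's own proof silently drops, and is a worthwhile addition.
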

    		    \begin{proof}
    		    It is straightforward to notice that, when equality constraints $q_{ik} = p_{ik}$ are satisfied, the last term in objective (\ref{eq:mi_adm}), which can be viewed as a soft penalty for enforcing those equality constraints,  vanishes. Objectives (\ref{eq:tim_objective}) and (\ref{eq:mi_adm}) then become equivalent.
    		    \end{proof}
    		    Splitting the problem into sub-problems on $\mathbf{W}$ and $\textbf{q}$ as in Eq. \eqref{eq:mi_adm} is closely related to the general principle of ADMM (Alternating Direction Method of Multipliers) \cite{boyd}, except that the KL divergence is not a typical penalty for imposing the equality constraints\footnote{Typically, ADMM methods use multiplier-based quadratic penalties for enforcing the equality constraint.}.    		  %  \footnote{The standard form of ADMM methods replaces one-variable problem $\min_x f(x) + g(x)$ into a constrained two-variable problem $\min_{x,y} f(x) + g(y)$ s.t. equality constraint $x = y$}.
    		    The main idea is to \textbf{decompose the original problem into two easier sub-problems}, one over $\mathbf{W}$ and the other over $\q$, which can be alternately solved, each in closed-form. Interestingly, this KL penalty is important as it completely removes the need for dual iterations for the simplex constraints in \eqref{eq:mi_adm}, yielding closed-form solutions:
    		    
    		  %  Notice that auxiliary variables $q$ have been introduced in this new formulation. The advantage of splitting the problem between classifier weights $W$ and auxiliary variables $q$ is that the problem can now be approximately solved in closed form:
    
    		    \begin{prop}\label{prop:mi_adm_solution}
    		        ADM formulation in Proposition \ref{prop:mi_adm} can be approximately solved by alternating the following closed-form updates w.r.t auxiliary variables $\textbf{q}$ and classifier weights $\mathbf{W}$ ($t$ is the iteration index):
    		        \begin{align}
    		          %  q_{ik}^{(t+1)} &\propto \frac{\displaystyle p_{ik}^{(t)^{2}}}{ \left ( \displaystyle\sum_{i \in \mathcal{Q}} p_{ik}^{(t)^{2}} \right )^{1/2}} \\ 
    		          q_{ik}^{(t+1)} &\propto \quad \displaystyle \frac{ \left ( p_{ik}^{(t)} \right )^{1+\alpha}}{\displaystyle \left ( \sum_{i \in \mathcal{Q}} \left ( p_{ik}^{(t)} \right )^{1+\alpha} \right )^{1/2}}  \\
    		          \w_k^{(t+1)} &\leftarrow \frac{\displaystyle\frac{\lambda}{1+\alpha} \displaystyle\sum\limits_{i \in \mathcal{S}} \left ( y_{ik} \,\z_i + p_{ik}^{(t)} (\w_k^{(t)} - \z_i) \right ) +  \frac{|\mathcal{S}|}{|\mathcal{Q}|} \displaystyle\sum\limits_{i \in \mathcal{Q}} \left ( q_{ik}^{(t+1)} \z_i + p_{ik}^{(t)} (\w_k^{(t)} - \z_i) \right )}{\displaystyle\frac{\lambda}{1+\alpha} \displaystyle\sum\limits_{i \in \mathcal{S}} y_{ik} +  \frac{|\mathcal{S}|}{|\mathcal{Q}|} \sum\limits_{i \in \mathcal{Q}} q_{ik}^{(t+1)}}
    		        \end{align}
    		    \end{prop}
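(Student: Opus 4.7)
The plan is to tackle the two ADM blocks separately, deriving a closed-form (or approximately closed-form) stationary update for each. For the $\mathbf{q}$-block with $\mathbf{W}$ (hence $\mathbf{p}$) held fixed, the only non-separable coupling is through the marginal term $\sum_k \widehat{q}_k \log(\widehat{q}_k/\pi_k)$, since all other $\mathbf{q}$-dependent terms split across the rows $i$. I would write the Lagrangian for the $|\mathcal{Q}|$ simplex constraints, use $\partial \widehat{q}_k/\partial q_{ik} = 1/|\mathcal{Q}|$ when differentiating, and collect the $\log q_{ik}$ terms. The first-order condition factorizes into the implicit relation $q_{ik} \propto p_{ik}^{1+\alpha}\,\pi_k/\widehat{q}_k$. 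Substituting this into the self-consistency $\widehat{q}_k = \tfrac{1}{|\mathcal{Q}|}\sum_i q_{ik}$ and approximating the per-row normalizing constant as $i$-independent yields $\widehat{q}_k \propto \sqrt{\pi_k\,\sum_i p_{ik}^{1+\alpha}}$, which (with a uniform prior $\pi_k$ and a final renormalization over $k$) gives the stated formula $q_{ik}^{(t+1)} \propto (p_{ik}^{(t)})^{1+\alpha}/\bigl(\sum_i (p_{ik}^{(t)})^{1+\alpha}\bigr)^{1/2}$.

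For the $\mathbf{W}$-block with $\mathbf{q}$ fixed, I would decompose the objective using $-\log p_{ik} = \tfrac{\tau}{2}\|\mathbf{w}_k - \mathbf{z}_i\|^2 + \log Z_i$ into a strictly quadratic part in the $\mathbf{w}_k$'s and a log-partition part $\log Z_i$ that couples all classes through the softmax. The quadratic part is kept exact; $\log Z_i$ I would linearize at $\mathbf{W}^{(t)}$ using $\nabla_{\mathbf{w}_k}\log Z_i = -\tau\,p_{ik}(\mathbf{w}_k - \mathbf{z}_i)$ evaluated at $\mathbf{w}_k^{(t)}$. The gradient of the resulting majorized $\mathbf{W}$-objective is affine in $\mathbf{w}_k$; setting it to zero gives, for each $k$, a linear equation whose solution is a weighted centroid of the $\mathbf{z}_i$ (with weights $y_{ik}$ over $\mathcal{S}$ and $q_{ik}^{(t+1)}$ over $\mathcal{Q}$) plus the correction terms $p_{ik}^{(t)}(\mathbf{w}_k^{(t)} - \mathbf{z}_i)$ inherited from the linearized $\log Z_i$. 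A final rescaling of both numerator and denominator by $|\mathcal{S}|/(1+\alpha)$ normalizes the pre-factors to $\lambda/(1+\alpha)$ and $|\mathcal{S}|/|\mathcal{Q}|$, reproducing the update exactly as stated.

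The main obstacle is the $\mathbf{q}$-step: because $\widehat{q}_k$ couples all rows $i$, the exact first-order condition is a non-linear fixed-point system without a closed form. The key move is to replace the per-row Lagrange multipliers by a single $i$-independent constant before solving the self-consistency equation, which is precisely what makes the update \emph{approximate} in the statement and produces the $\bigl(\sum_i p_{ik}^{1+\alpha}\bigr)^{1/2}$ denominator. A secondary subtlety on the $\mathbf{W}$-side is justifying the partial linearization: since $\log Z_i$ is smooth and convex in $\mathbf{W}$ and the quadratic surrogate for the remaining term is exact, the step is a bona fide Majorization--Minimization update in which only $\log Z_i$ is replaced by its tangent hyperplane, so each iteration remains a well-defined descent step on the $\mathbf{W}$-subproblem.
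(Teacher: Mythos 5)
Your derivation follows essentially the same route as the paper for both blocks: the $\mathbf{q}$-step via the Lagrangian/KKT conditions with the entropic penalty acting as a barrier for the non-negativity constraints, the implicit relation $q_{ik}\propto p_{ik}^{1+\alpha}/\widehat{q}_k$, and the self-consistency substitution into $\widehat{q}_k=\frac{1}{|\mathcal{Q}|}\sum_{i}q_{ik}$ that drops the $i$-dependent normalizer (you correctly flag this as the source of the approximation, a point the paper glosses over); and the $\mathbf{W}$-step by keeping the quadratic part exact, linearizing the log-partition at $\mathbf{W}^{(t)}$ using $\nabla_{\w_k}\log Z_i=\tau\,p_{ik}(\z_i-\w_k)$, solving the affine stationarity condition, and rescaling by $|\mathcal{S}|/(1+\alpha)$. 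The one inaccuracy is your closing justification of the $\mathbf{W}$-step: $\log Z_i=\log\sum_j e^{-\frac{\tau}{2}\norm{\z_i-\w_j}^2}$ is the log of a Gaussian mixture and is in general neither convex nor concave in $\mathbf{W}$ (the paper explicitly labels this part non-convex), and even if it were convex its tangent hyperplane would be a \emph{minorizer}, so replacing it by its tangent is not a bona fide majorization--minimization step and carries no descent guarantee; the paper accordingly claims only that the subproblem is solved approximately. This does not affect the correctness of the stated update formulas themselves.
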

    		    \begin{proof}
    		    A detailed proof is deferred to the supplementary material. Here, we summarize the main technical ingredients of the approximation. Keeping the auxiliary variables $\textbf{q}$ fixed, we optimize
    		    a convex approximation of Eq. \eqref{eq:mi_adm} w.r.t $\mathbf{W}$. With $\mathbf{W}$ fixed, the objective is strictly convex w.r.t the auxiliary variables $\textbf{q}$ whose updates come from a closed-form solution of the KKT (Karush–Kuhn–Tucker) conditions. Interestingly, the negative entropy of auxiliary variables, which appears in the penalty term, handles implicitly the simplex constraints, which removes the need for dual iterations to solve the KKT conditions.
    		    \end{proof}
    		   % We will refer to this approach to as TIM-ADM. The detailed algorithms for TIM-GD and TIM-ADM are presented in \autoref{sec:algorithms}.

\section{Experiments}

    \textbf{Hyperparameters: } To keep our experiments as simple as possible, \textbf{our hyperparameters are kept fixed across all the experiments and methods (TIM-GD and TIM-ADM)}. The conditional entropy weight $\alpha$ and the cross-entropy weights $\lambda$ in Objective (\ref{eq:tim_objective}) are both set to $0.1$. The temperature parameter $\tau$ in the classifier is set to 15. 
    % In supp. material, however, we show that tuning $\tau$ and $\lambda$ on the validation set can further improve the performance. 
    In our TIM-GD method, we use the ADAM optimizer with the recommended parameters \cite{kingma2014adam}, and run 1000 iterations for each task. For TIM-ADM, we run 150 iterations.
    
    \textbf{Base-training procedure:} The feature extractors are trained following the same simple base-training procedure as in \cite{Laplacian} and using standard networks (ResNet-18 and WRN28-10), for all the experiments. Specifically, they are trained using the standard cross-entropy loss on the base classes, with label smoothing. The label-smoothing parameter is set to 0.1. We emphasize that base training does not involve any meta-learning or episodic training strategy. The models are trained for 90 epochs, with the learning rate initialized to 0.1, and divided by 10 at epochs 45 and 66. Batch size is set to 256 for ResNet-18, and to 128 for WRN28-10. During training, all the images are resized to $84 \times 84$, and we used the same data augmentation procedure as in \cite{Laplacian}, which includes random cropping, color jitter  and random horizontal flipping. 

    \textbf{Datasets: } We resort to 3 few-shot learning  datasets to benchmark the proposed models. As standard few-shot benchmarks, we use the \textbf{\textit{mini}-Imagenet} \cite{matching_net} dataset, with 100 classes split as in \cite{ravi2016optimization}, the \textbf{Caltech-UCSD Birds 200} \cite{cub} (CUB) dataset, with 200 classes, split following \cite{closer_look}, and finally the larger \textbf{\textit{tiered}-Imagenet} dataset, with 608 classes split as in \cite{tiered_imagenet}.
    % Finally, we used the challenging and more realistic \textbf{meta-iNat} dataset \cite{inat}, with 908 classes and severe class imbalance, split according to \cite{inat_benchmark}.

    \subsection{Comparison to state-of-the-art}\label{sec:benchmark}

        We first evaluate our methods TIM-GD and TIM-ADM on the widely adopted \textit{mini}-ImageNet, \textit{tiered}-ImageNet and \textit{CUB} benchmark datasets, in the most common 1-shot 5-way and 5-shot 5-way scenarios, with 15 query shots for each class. Results are reported in \autoref{tab:benchmark_results}, and are averaged over 10,000 episodes, following \cite{simpleshot}. We can observe that both TIM-GD and TIM-ADM yield state-of-the-art performances, consistently across all standard datasets, scenarios and backbones, improving over both transductive and inductive methods, by significant margins. 
        % In supp. material, we further show the superiority of our methods in the 10 and 20-way s
        %and outperforming the best performing methods by over $5\%$ in the 1-shot setting.
        %and $4\%$ on 5-shot tasks.
        
        \begin{table}
        \centering
        \small
        \caption{Comparison to the state-of-the-art methods on \textit{mini}-ImageNet, \textit{tiered}-Imagenet and CUB. The methods are sub-grouped into transductive and inductive methods, as well as by backbone architecture. Our results (gray-shaded) are averaged over 10,000 episodes. "-" signifies the result is unavailable.}
        \begin{tabular}{lccccccccc}
            & & &\multicolumn{2}{c}{\textbf{\textit{mini}-ImageNet}} & \multicolumn{2}{c}{\textbf{\textit{tiered}-ImageNet}} & \multicolumn{2}{c}{\textbf{CUB}} \\
             Method & Transd. & Backbone & 1-shot & 5-shot & 1-shot & 5-shot & 1-shot & 5-shot\\
             \toprule
             MAML \cite{maml}& \multirow{9}{*}{\xmark} & ResNet-18 & 49.6 & 65.7 & - & - & 68.4 & 83.5 \\
            %  Baseline++ \cite{closer_look}& & ResNet-18 & 51.9 & 75.7 & - & - \\
             RelatNet \cite{relation_net}& & ResNet-18 & 52.5 & 69.8 &  - & - & 68.6 & 84.0 \\
             MatchNet \cite{matching_net}& & ResNet-18 & 52.9 & 68.9 &  - & - & 73.5 & 84.5 \\
             ProtoNet \cite{prototypical_nets}& & ResNet-18 & 54.2 & 73.4 &  - & - & 73.0 & 86.6 \\
            %  TADAM \cite{tadam} & & ResNet-12 & 58.5 & 76.7 & - & - \\
             MTL \cite{sun2019meta} & & ResNet-12 & 61.2 & 75.5 & - & - &  - & - \\
             vFSL \cite{variational_fsl} & & ResNet-12 & 61.2 & 77.7 & - & - & - & - \\
             Neg-cosine \cite{liu2020negative}& & ResNet-18 & 62.3 & 80.9 & - & - & 72.7 & 89.4 \\
             MetaOpt \cite{lee2019meta} & & ResNet-12 & 62.6 & 78.6 & 66.0 & 81.6 & - & -\\
             SimpleShot \cite{simpleshot} & & ResNet-18 & 62.9 & 80.0 & 68.9 & 84.6 & 68.9 & 84.0 \\
             Distill \cite{tian2020rethinking} & & ResNet-12 & 64.8 & 82.1 & 71.5 & 86.0 & - & -\\
             \midrule
             RelatNet + T \cite{can} & & ResNet-12 & 52.4 & 65.4 & - & - & - & -\\
             ProtoNet + T \cite{can} & & ResNet-12 & 55.2 & 71.1 & - & - & - & -\\
             MatchNet+T \cite{can} &  & ResNet-12 & 56.3 & 69.8 & - & - & - & -\\
             TPN \cite{liu2018learning} &  & ResNet-12 & 59.5 & 75.7 & - & - & - & -\\
             TEAM \cite{team}& & ResNet-18 & 60.1 & 75.9 & - & - & - & -\\
             Ent-min \cite{dhillon2019baseline}& & ResNet-12 & 62.4 & 74.5 & 68.4 & 83.4 & - & -\\
             CAN+T \cite{can}& & ResNet-12 & 67.2 & 80.6 & 73.2 & 84.9 & - & -\\
             LaplacianShot \cite{Laplacian}& & ResNet-18 & 72.1 & 82.3 & 79.0 & 86.4 & 81.0 & 88.7 \\
             \rowcolor{Gray} TIM-ADM & & ResNet-18 & 73.6 & \textbf{85.0} & \textbf{80.0} & \textbf{88.5} & 81.9 & 90.7 \\
             \rowcolor{Gray} TIM-GD & \multirow{-10}{*}{\cmark} & ResNet-18 & \textbf{73.9} & \textbf{85.0} & 79.9 & \textbf{88.5} & \textbf{82.2} & \textbf{90.8} \\
             \toprule
             LEO \cite{leo}& \multirow{5}{*}{\xmark} & WRN28-10 & 61.8 & 77.6 & 66.3 & 81.4 & - & -\\
             SimpleShot \cite{simpleshot}& & WRN28-10 & 63.5 & 80.3 & 69.8 & 85.3 & - & -\\
             MatchNet \cite{matching_net}& & WRN28-10 & 64.0 & 76.3 & - & - & - & -\\
             CC+rot+unlabeled \cite{gidaris2019boosting} & & WRN28-10 & 64.0 & 80.7 & 70.5 & 85.0 & - & -\\
             FEAT \cite{feat} & & WRN28-10 & 65.1 & 81.1 & 70.4 & 84.4 & - & -\\
             \midrule
             AWGIM \cite{guo2020attentive}&  & WRN28-10 & 63.1 & 78.4 & 67.7 & 82.8 & - & -\\
             Ent-min \cite{dhillon2019baseline} &  & WRN28-10 & 65.7 & 78.4 & 73.3 & 85.5 & - & -\\
             SIB  \cite{hu2020empirical} &  & WRN28-10 & 70.0 & 79.2 & - & - & - & -\\
             BD-CSPN \cite{prototype}&  & WRN28-10 & 70.3 & 81.9 & 78.7 & 86.92 & - & -\\
             LaplacianShot  \cite{Laplacian} &  & WRN28-10 & 74.9 & 84.1 & 80.2 & 87.6 & - & -\\
             \rowcolor{Gray} TIM-ADM & & WRN28-10 & 77.5 & 87.2 & 82.0 & 89.7 & - & -\\
             \rowcolor{Gray} TIM-GD & \multirow{-7}{*}{\cmark} & WRN28-10 & \textbf{77.8} & \textbf{87.4} & \textbf{82.1} & \textbf{89.8} & - & -\\
            %  \toprule
            %  SimpleShot \cite{simpleshot}& \xmark & DenseNet-121 & 64.3 & 81.5 & 71.3 & 86.7 
            %  \\
            %  \midrule
            %  LaplacianShot \cite{Laplacian}&  & DenseNet-121 & 75.6 & 84.7 & 80.3 & 87.9
            %  \\
            %  \rowcolor{Gray} TIM-ADM &  & DenseNet-121 & 76.8 & 87.3 & 83.6 & 90.6 \\
            %  \rowcolor{Gray} TIM-GD & \multirow{-3}{*}{\cmark} & DenseNet-121 & \textbf{77.4} & \textbf{88.2} & \textbf{83.7} & \textbf{90.9} \\
             \bottomrule
        \end{tabular}
        \label{tab:benchmark_results}
        \end{table}
        
        \begin{table}
            \newlength\wexp
            \settowidth{\wexp}{\textbf{\textit{mini}-ImageNet} $\rightarrow$ \textbf{CUB}}
            \centering
            \small
            \caption{The results for the domain-shift setting \textit{mini}-Imagenet $\rightarrow$ CUB. The results obtained by our models (gray-shaded) are averaged over 10,000 episodes.}
            \begin{tabular}{lcccc}
            % \begin{tabular}{lcccc*{2}{w{c}{\dimexpr.5\wexp-\tabcolsep}}}
                & & \multicolumn{1}{c}{\textbf{\textit{mini}-ImageNet} $\rightarrow$ \textbf{CUB}}\\
                 Methods & Backbone & 5-shot\\
                 \toprule
                 MatchNet \cite{matching_net} & ResNet-18 & 53.1\\
                 MAML \cite{maml} & ResNet-18 & 51.3 \\
                 ProtoNet \cite{prototypical_nets} & ResNet-18 & 62.0 \\
                 RelatNet \cite{relation_net} & ResNet-18 & 57.7 \\
                %  CL2N \cite{simpleshot} & ResNet-18 & 70.1 & 86.3 & 48.0 & 64.0 \\
                SimpleShot \cite{simpleshot} & ResNet-18 & 64.0 \\
                GNN \cite{tseng2020cross} & ResNet-10 & 66.9 \\
                Neg-Cosine \cite{liu2020negative} & ResNet-18 & 67.0 \\
                Baseline \cite{closer_look} & ResNet-18 & 65.6 \\
                LaplacianShot \cite{Laplacian} & ResNet-18 & 66.3 \\
                \rowcolor{Gray} TIM-ADM & ResNet-18 & 70.3 \\
                \rowcolor{Gray} TIM-GD & ResNet-18 & \textbf{71.0} \\
                 \bottomrule
            \end{tabular}
            \label{tab:domin_shift_results}
        \end{table}
         
    \subsection{Impact of domain-shift}   
        Chen et al. \cite{closer_look} recently showed that the performance of most meta-learning methods may drop drastically when a domain-shift exists between the base training data and test data. Surprisingly, the simplest 
        discriminative baseline exhibited the best performance in this case. Therefore, we evaluate our methods in this challenging scenario. To this end, we simulate a domain shift by training the feature encoder on \textit{mini}-Imagenet while evaluating the methods on \textit{CUB}, similarly to the setting introduced in \cite{closer_look}. TIM-GD and TIM-ADM beat previous methods by significant margins in the domain-shift scenario, consistently with our results in the standard few-shot benchmarks, thereby demonstrating an increased potential of applicability to real-world situations.

    \subsection{Pushing the meta-testing stage}
        
        Most few-shot papers only evaluate their method in the usual 5-way scenario. Nevertheless, \cite{closer_look} showed that meta-learning methods could be beaten by their discriminative baseline when more ways were introduced in each task. Therefore, we also provide results of our method in the more challenging 10-way and 20-way scenarios on \textit{mini}-ImageNet. These results, which are presented in \autoref{tab:more_ways}, show that TIM-GD outperforms other methods by significant margins, in both settings.
        \begin{table}
            \centering
            \caption{Results for increasing the number of classes on \textit{mini}-ImageNet. The results obtained by our models (gray-shaded) are averaged over 10,000 episodes.}
            \small
            \begin{tabular}{lcccccccc}
                & & \multicolumn{2}{c}{10-way} & \multicolumn{2}{c}{20-way} \\
                 Methods & Backbone & 1-shot & 5-shot & 1-shot & 5-shot\\
                 \toprule
                 MatchNet \cite{matching_net}& ResNet-18 & - & 52.3 & - & 36.8 \\
                 ProtoNet \cite{prototypical_nets}& ResNet-18 & - & 59.2 & - & 45.0 \\
                 RelatNet \cite{relation_net}& ResNet-18 & - & 53.9 & - & 39.2 \\
                 SimpleShot \cite{simpleshot}& ResNet-18 & 45.1 & 68.1 & 32.4 & 55.4 \\
                 Baseline \cite{closer_look}& ResNet-18 & - & 55.0 & - & 42.0 \\
                 Baseline++ \cite{closer_look}& ResNet-18 & - & 63.4 & - & 50.9 \\
                 \rowcolor{Gray} TIM-ADM & ResNet-18 & 56.0 & \textbf{72.9} & \textbf{39.5} & 58.8 \\
                 \rowcolor{Gray} TIM-GD & ResNet-18 & \textbf{56.1} & 72.8 & 39.3 & \textbf{59.5} \\
                 \bottomrule
            \end{tabular}
            \label{tab:more_ways}
        \end{table}

    \subsection{Ablation study}

        \textbf{Influence of each term: }\label{sec:ablation_terms} We now assess the impact of each term\footnote{The $\textbf{W}$ and $\textbf{q}$ updates of TIM-ADM associated to each configuration can be found in the supplementary material.} in our loss in Eq. (\ref{eq:tim_objective}) on the final performance of our methods. The results are reported in \autoref{tab:ablation_effect_terms}. 
        We observe that integrating the three terms in our loss consistently outperforms any other configuration. Interestingly, removing the label-marginal entropy, $\widehat{\mathcal{H}}(Y_\mathcal{Q})$, reduces significantly the performances in both TIM-GD and TIM-ADM, particularly when only classifier weights $\textbf{W}$ are 
        updated and feature extractor $\boldsymbol{\phi}$ is fixed. 
      %  (referred to as TIM-GD $\{\textbf{W}\}$ and TIM-ADM $\{\textbf{W}\}$). 
        Such a behavior could be explained by the following fact: the conditional entropy term, $\mathcal{\widehat{H}}(Y_\mathcal{Q}|X_\mathcal{Q})$, may yield degenerate solutions (assigning all query samples to a single class) on numerous tasks, when used alone. 
        % Even though the performances can be improved with careful hyperparameter tuning, as shown in supp. material,
        This emphasizes the importance of the label-marginal entropy term $\widehat{\mathcal{H}}(Y_\mathcal{Q})$ in our loss (\ref{eq:tim_objective}), which acts as a powerful regularizer to prevent such trivial solutions.
        
        \begin{table}[t]
           \small
           \centering
            \caption{Ablation study on the effect of each term in our loss in Eq. \eqref{eq:tim_objective}, when only the classifier weights are fine-tuned, i.e., updating only $\textbf{W}$, and when the whole network is fine-tuned, i.e., updating $\{\boldsymbol{\phi},\textbf{W}\}$. The results are reported for ResNet-18 as backbone. 
            The same term indexing as in Eq. \eqref{eq:tim_objective} is used here: $\widehat{\mathcal{H}}(Y_\mathcal{Q})$: Marginal entropy, $\widehat{\mathcal{H}}(Y_\mathcal{Q}|X_\mathcal{Q})$: Conditional entropy, $\mathrm{CE}$: Cross-entropy.}
           \resizebox{\textwidth}{!}{
           \begin{tabular}{ccccccccc}
                 & & & \multicolumn{2}{c}{\textbf{\textit{mini}-ImageNet}} & \multicolumn{2}{c}{\textbf{\textit{tiered}-ImageNet}} & \multicolumn{2}{c}{\textbf{CUB}}\\
                 Method & Param. & Loss & 1-shot & 5-shot & 1-shot & 5-shot & 1-shot & 5-shot \\
                 \toprule
                 \multirow{4}{*}{TIM-ADM} & \multirow{4}{*}{$\{\textbf{W}\}$} & $\mathrm{CE}$ & 60.0 & 79.6 & 68.0 & 84.6 & 68.6 & 86.4 \\
                 & & $\mathrm{CE} + \widehat{\mathcal{H}}(Y_\mathcal{Q}|X_\mathcal{Q})$ & 36.0 & 77.0 & 48.1 & 82.5 & 48.5 & 86.5 \\
                 & & $\mathrm{CE} - \widehat{\mathcal{H}}(Y_\mathcal{Q})$ & 66.7 & 82.0 & 74.0 & 86.5 & 74.2 & 88.3 \\
                 & & $\mathrm{CE} - \widehat{\mathcal{H}}(Y_\mathcal{Q}) + \widehat{\mathcal{H}}(Y_\mathcal{Q}|X_\mathcal{Q})$ & \textbf{73.6} & \textbf{85.0} & \textbf{80.0} & \textbf{88.5} & \textbf{81.9} & \textbf{90.7} \\
                 \midrule
                 \multirow{4}{*}{TIM-GD} & \multirow{4}{*}{\{$\textbf{W} $\}}& $\mathrm{CE}$ & 60.7 & 79.4 & 68.4 & 84.3 & 69.6 & 86.3 \\
                 & & $\mathrm{CE} + \widehat{\mathcal{H}}(Y_\mathcal{Q}|X_\mathcal{Q})$ & 35.3 & 79.2 & 45.9 & 80.6 & 46.1 & 85.9 \\
                 & & $\mathrm{CE} - \widehat{\mathcal{H}}(Y_\mathcal{Q})$ & 66.1 & 81.3 & 73.4 & 86.0 & 73.9 & 88.0 \\
                 & & $\mathrm{CE} - \widehat{\mathcal{H}}(Y_\mathcal{Q}) + \widehat{\mathcal{H}}(Y_\mathcal{Q}|X_\mathcal{Q})$ & \textbf{73.9} & \textbf{85.0} & \textbf{79.9} & \textbf{88.5} & \textbf{82.2} & \textbf{90.8} \\
                 \midrule
                 \multirow{4}{*}{TIM-GD} & \multirow{4}{*}{$\{\boldsymbol{\phi}, \textbf{W}\}$}& $\mathrm{CE}$ & 60.8 & 81.6 & 65.7 & 83.5 & 68.7 & 87.7\\
                 & & $\mathrm{CE} + \widehat{\mathcal{H}}(Y_\mathcal{Q}|X_\mathcal{Q})$ & 62.7 & 81.9 & 66.9 & 82.8 & 72.6 & 89.0 \\
                 & & $\mathrm{CE} - \widehat{\mathcal{H}}(Y_\mathcal{Q})$ & 62.3 & 82.7 & 68.3 & 85.4 & 70.7 & 88.8 \\
                %  & & $\mathrm{CE} - \widehat{\mathcal{H}}(Y) + \widehat{\mathcal{H}}(Y|X)$ & 67.2 & 83.9 & 73.1 & 86.6 & 76.4 & 89.9 \\
                 & & $\mathrm{CE} - \widehat{\mathcal{H}}(Y_\mathcal{Q}) + \widehat{\mathcal{H}}(Y_\mathcal{Q}|X_\mathcal{Q})$ & \textbf{67.2} & \textbf{84.7} & \textbf{73.0} & \textbf{86.8} & \textbf{76.7} & \textbf{90.5} \\
                 \bottomrule
            \end{tabular}}
            \label{tab:ablation_effect_terms}
        \end{table}

        \textbf{Fine-tuning the whole network vs only the classifier weights: }
            While our TIM-GD and TIM-ADM optimize w.r.t $\textbf{W}$ and keep base-trained encoder $f_{\boldsymbol{\phi}}$ fixed at inference, the authors of \cite{dhillon2019baseline} fine-tuned the whole network $\{\textbf{W}, \boldsymbol{\phi}\}$ when performing their transductive entropy minimization. To assess both approaches, we add to \autoref{tab:ablation_effect_terms} a variant of TIM-GD, in which we fine-tune the whole network $\{\textbf{W}, \boldsymbol{\phi}\}$, by using the same optimization procedure as in \cite{dhillon2019baseline}. We found that, besides being much slower, fine-tuning the whole network for our objective in Eq. \ref{eq:tim_objective} degrades the performances, as also conveyed by the convergence plots in \autoref{fig:convergence_methods}. 
            %Such results validate our initial optimization choice. 
            Interestingly, when fine-tuning the whole network $\{\textbf{W}, \boldsymbol{\phi}\}$,  the absence of $\widehat{\mathcal{H}}(Y_\mathcal{Q})$ in the entropy-based loss $\mathrm{CE} + \widehat{\mathcal{H}}(Y_\mathcal{Q}|X_\mathcal{Q})$ does not cause the same drastic drop in performance as observed earlier when optimizing with respect to $\textbf{W}$ only. We hypothesize that the network's intrinsic regularization (such as batch normalizations) and the use of small learning rates, as prescribed by \cite{dhillon2019baseline}, help the optimization process, preventing the predictions from approaching the vertices of the simplex, where entropy's gradients diverge.

    \subsection{Inference run-times}
        Transductive methods are generally slower at inference than their inductive counterparts, with run-times that are, typically, several orders of magnitude larger. In \autoref{tab:runtimes}, we measure the average adaptation time per few-shot task, defined as the time required by each method to build the final classifier, for a 5-shot 5-way task on \textit{mini}-ImageNet using the WRN28-10 network. Table \ref{tab:runtimes} conveys that our ADM optimization gains one order of magnitude in run-time over our gradient-based method, and more than two orders of magnitude in comparison to \cite{dhillon2019baseline}, which fine-tunes the whole network. Note that TIM-ADM still remains slower than the inductive baseline. Our methods were run on the same GTX 1080 Ti GPU, while the run-time of \cite{dhillon2019baseline} is directly reported from the paper. 

        \begin{figure}[t]
            \centering
            \includegraphics[scale=0.4]{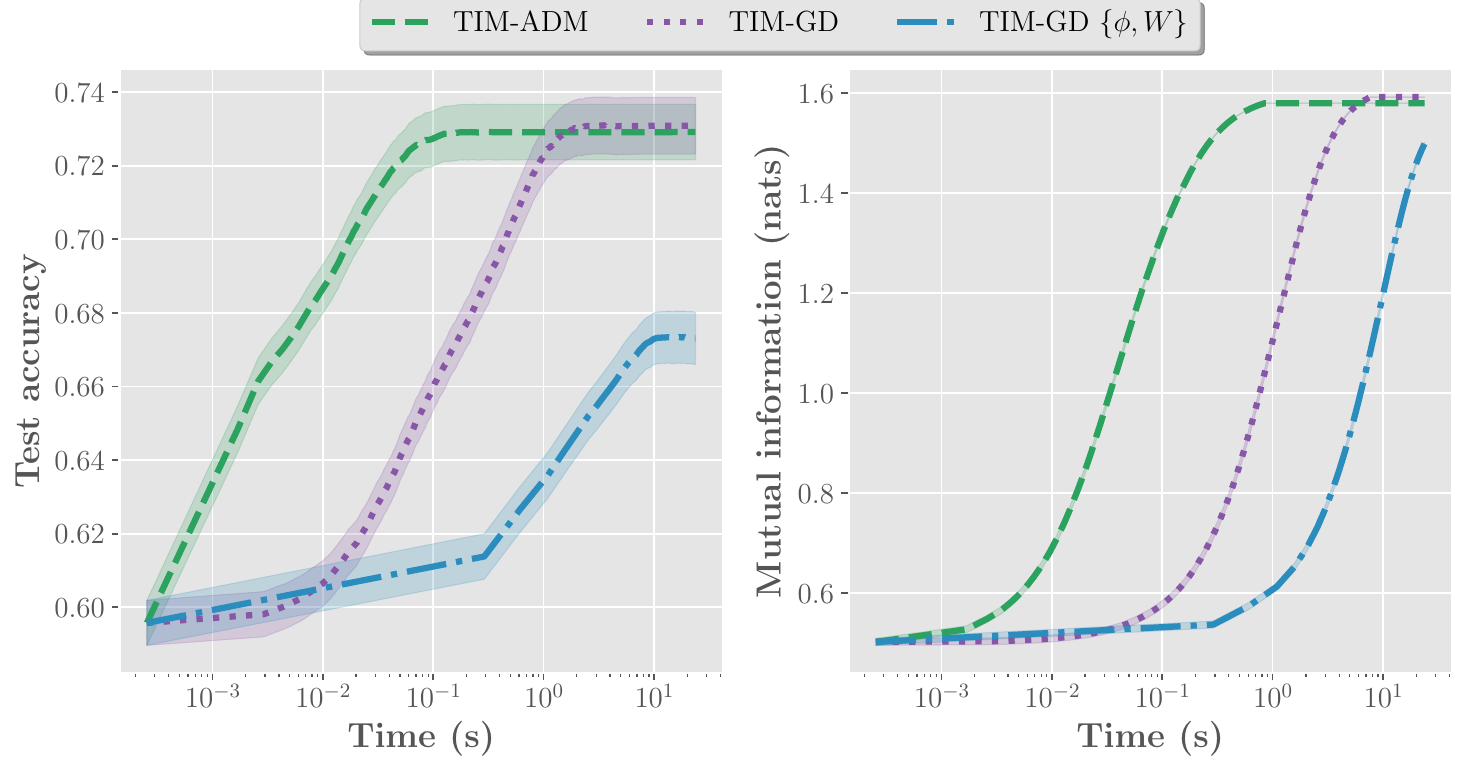}
            \caption{Convergence plots for our methods on \textit{mini}-ImageNet with a ResNet-18. Solid lines are averages, while shadows are 95\% confidence intervals. Time is in logarithmic scale. \textbf{Left:} Evolution of the test accuracy during transductive inference. \textbf{Right:} Evolution of the mutual information between query features and predictions $\widehat{\mathcal{I}}(X_\mathcal{Q};Y_\mathcal{Q})$, computed as in Eq. (\ref{prior-aware-MI}), with $\alpha=1$.}
            \label{fig:convergence_methods}
        \end{figure}
        
        \begin{table}[t]
            \centering
                \begin{tabular}{lccc}
                    \multicolumn{4}{c}{\textbf{Run-times}} \\
                    \toprule
                    Method & Parameters & Transductive & Inference/task (s) \\
                    \toprule
                    SimpleShot \cite{simpleshot}& $\{\textbf{W}\}$ & \xmark & $9.0 \times 10^{-3}$ \\
                    \midrule
                    % LaplacianShot & $\{\textbf{W}\}$ & & $1.2 \times 10^{-2}$ \\
                    \rowcolor{Gray} TIM-ADM & $\{\textbf{W}\}$ & & $1.2 \times 10^{-1}$\\
                    \rowcolor{Gray} TIM-GD & $\{\textbf{W}\}$ & & $2.2 \times 10^{+0}$\\
                    Ent-min \cite{dhillon2019baseline} & $\{\boldsymbol{\phi}, \textbf{W}\}$ & \multirow{-4}{*}{\cmark} & $2.1 \times 10^{+1}$ \\
                    \bottomrule
                \end{tabular}
            \caption{Inference run-time per few-shot task for a 5-shot 5-way task on mini-ImageNet with a WRN28-10 backbone.}
            \label{tab:runtimes}
        \end{table}

\section{Conclusion and future work}

    % Significant efforts have been undertaken in designing increasingly sophisticated meta-learning based methods tailored to few-shot benchmark datasets. Without tips and tricks, we obtained state-of-the-art results consistently using a feature extractor that was trained with standard cross-entropy, hence reaffirming that meta-learning needs not be the optimal way to tackle the few-shot learning problem. Embedding support-based information into our objective allowed us to deal with issues encountered with bare entropy minimization w.r.t the classifier's weights, thus outperforming previous gradient-based transductive methods in both runtime speedup and accuracy. Aligned with the goal of reducing inference time, we derived an  Alternating Direction optimization scheme, tailored to balanced few-shot tasks, that exhibits close to inductive runtime, while maintaining state-of-the-art accuracy.
    Our TIM inference establishes new state-of-the-art results on the standard few-shot benchmarks, as well as in more challenging scenarios, with larger numbers of classes and domain shifts. We used feature extractors based on a simple base-class training with the standard cross-entropy loss, without resorting to the complex meta-training schemes that are often used and advocated in the recent few-shot literature. TIM is modular: it could be plugged on top of any feature extractor and base training, regardless of how the training was conducted. Therefore, while we do not claim that the very challenging few-shot problem is solved, we believe that our model-agnostic TIM inference should be used as a strong baseline for future few-shot learning research. In future work, we target on giving a more theoretical ground for our proposed mutual-information objective, and on exploring further generalizations of the objective, e.g., via embedding domain-knowledge priors. Specifically, one of our theoretical goals will be to connect TIM's objective to the classifier's empirical risk on the query set, showing that the former could be viewed as a surrogate for the latter.

\section{Acknowledgements}
This research was supported by the National Science and Engineering Research Council of Canada (NSERC), via its Discovery Grant program.

 \section*{Broader impact}

    Due to the simplicity and efficiency of our method, we lower the barrier of entry to few-shot learning. In turn, we think that it will make a wider breadth of real-world applications tractable. The impact (positive or negative) on society is similar to that of any other few-shot method: being only a tool, its impact is entirely dependent on the final applications, and on the intentions of the people and institutions deploying it.
    
    In our strive towards finding simple and efficient formulations -- for instance, we stick to a standard cross-entropy, which not only eases implementation, but also avoid the huge memory consumption of more complex methods -- we believe our method can enable and empower persons and communities that are unable to afford the costly resources and infrastructures required. This may help level the playing field with larger and better funded entities. For instance, to be adapted to a new task, our TIM-ADM method requires a little more than a recent smartphone computational power. This could spawn a lot of fresh and new applications on edge devices, closer to the end-users, in real-time.
\clearpage
\bibliographystyle{plain}
\bibliography{main}

%\clearpage

\clearpage
\appendix
\section{Proofs}\label{sec:proofs}

    \textbf{Proof of Proposition \ref{prop:mi_adm}}
        \begin{proof}
            Let us start from the initial optimization problem:
            \begin{align}\label{eq:adm_proof_1}
	            \min_{\mathbf{W}}\quad & \sum_{k=1}^K \widehat{p_{k}} \log \widehat{p_{k}} - \frac{\alpha}{|\mathcal{Q}|} \sum_{i \in \mathcal{Q}} \sum_{k=1}^K p_{ik} \log p_{ik} - \frac{\lambda }{|\mathcal{S}|}\sum_{i \in \mathcal{S}}\sum_{k=1}^K y_{ik} \log p_{ik}
            \end{align}
            We can reformulate problem (\ref{eq:adm_proof_1}) using the ADM approach, i.e., by introducing auxiliary variables $\q=[q_{ik}] \in \mathbb{R}^{|\mathcal{Q}| \times K}$ and enforcing 
            equality constraint $\q = \p$, with $\p=[p_{ik}] \in \mathbb{R}^{|\mathcal{Q}| \times K}$, in addition to pointwise simplex constraints: 
	        \begin{align}\label{eq:adm_proof_2}
	            \min_{\mathbf{W}, \q}\quad & \sum_{k=1}^K \widehat{q_{k}} \log \widehat{q_{k}} - \frac{\alpha}{|\mathcal{Q}|} \sum_{i \in \mathcal{Q}} \sum_{k=1}^K q_{ik} \log p_{ik} - \frac{\lambda }{|\mathcal{S}|}\sum_{i \in \mathcal{S}}\sum_{k=1}^K y_{ik} \log p_{ik} \nonumber \\
	            \text{s.t.}\quad & q_{ik}=p_{ik}, \quad i \in \mathcal{Q}, \quad k \in \{1, \dots, K\} \nonumber \\ 
	            & \sum_{k=1}^K q_{ik}=1, \quad i \in \mathcal{Q} \nonumber \\
	            & q_{ik} \geq 0, \quad i \in \mathcal{Q}, \quad k \in \{1,\dots,K\}
	         \end{align}
	       %  The constraints in (\ref{eq:adm_proof_2}) are highly non-linear in $\{W\}$ due to the presence of the softmax, which complexifies the problem. Instead, 
	         We can slove constrained problem (\ref{eq:adm_proof_2}) with a penalty-based approach, which encourages auxiliary pointwise predictions $\q_{i}=[q_{i1}, \dots, q_{iK}]$ to be close to our model's posteriors $\p_{i}=[p_{i1}, \dots, p_{iK}]$. To add a penalty encouraging equality constraints $\q_i = \p_i$, we use the Kullback–Leibler (KL) divergence, which is given by:
	         \[\mathcal{D}_{\mbox{\tiny KL}}(\q_i||\p_i) = \sum_{k=1}^K q_{ik} \log \frac{q_{ik}}{p_{ik}}\]
	         Thus, our constrained optimization problem becomes:
	         \begin{align}\label{eq:adm_proof_3}
	            \min_{\mathbf{W}, \q}\quad & \sum_{k=1}^K \widehat{q_{k}} \log \widehat{q_{k}} - \frac{\alpha}{|\mathcal{Q}|} \sum_{i \in \mathcal{Q}} \sum_{k=1}^K q_{ik} \log p_{ik} - \frac{\lambda }{|\mathcal{S}|}\sum_{i \in \mathcal{S}}\sum_{k=1}^K y_{ik} \log p_{ik} + \frac{1}{|\mathcal{Q}|}\sum_{i \in \mathcal{Q}} \mathcal{D}_{\mbox{\tiny KL}}(\q_i||\p_i) \nonumber \\
	            \text{s.t.}\quad & \sum_{k=1}^K q_{ik}=1, \quad i \in \mathcal{Q} \nonumber \\
	            & q_{ik} \geq 0, \quad i \in \mathcal{Q}, \quad k \in \{1,\dots,K\}
	         \end{align}
        \end{proof}

	\textbf{Proof of Proposition \ref{prop:mi_adm_solution}}
    	\begin{proof}
            Recall that we consider a softmax classifier over distances to weights $\mathbf{W}=\{\w_1, \dots, \w_K\}$. To simplify the notations, we will omit the dependence upon $\boldsymbol{\phi}$ in what follows, and write ${\z_i=\frac{f_{\boldsymbol{\phi}}(\x_i)}{\norm{f_{\boldsymbol{\phi}}(\x_i)}}}$, such that:
            \begin{align}
                p_{ik} = \frac{e^{-\frac{\tau}{2}\norm{\z_i - \w_k}^2}}{\sum_{j=1}^K e^{-\frac{\tau}{2}\norm{\z_i - \w_j}^2}}
            \end{align}
            Without loss of generality, we use $\tau=1$ in what follows. Plugging the expression of $p_{ik}$ into Eq. \eqref{eq:mi_adm}, and grouping terms together, we get:
            \begin{align}\label{eq:update_proof_1}
                (\ref{eq:mi_adm}) =& \sum_{k=1}^K \widehat{q_k} \log \widehat{q_k} - \frac{1+\alpha}{|\mathcal{Q}|} \sum_{i \in \mathcal{Q}} \sum_{k=1}^K q_{ik} \log p_{ik}  - \frac{\lambda}{|\mathcal{S}|} \sum_{i \in \mathcal{S}} \sum_{k=1}^K y_{ik} \log p_{ik} + \frac{1}{|\mathcal{Q}|} \sum_{i \in \mathcal{Q}}\sum_{k=1}^K q_{ik} \log q_{ik} \nonumber \\
                \begin{split}
                    =& \sum_{k=1}^K \widehat{q_k} \log \widehat{q_k} \\
                    & +\frac{1+\alpha}{2|\mathcal{Q}|} \sum_{i \in \mathcal{Q}} \sum_{k=1}^K q_{ik} \norm{\z_i - \w_k}^2 + \frac{1+\alpha}{|\mathcal{Q}|} \sum_{i \in \mathcal{Q}} \log \left (\sum_{j=1}^K e^{-\frac{1}{2}\norm{\z_i - \w_j}^2} \right ) \\
                    & + \frac{\lambda}{2|\mathcal{S}|} \sum_{i \in \mathcal{S}}\sum_{k=1}^K y_{ik} \norm{\z_i - \w_k}^2 + \frac{\lambda}{|\mathcal{S}|} \sum_{i \in \mathcal{S}} \log \left (\sum_{j=1}^K e^{-\frac{1}{2}\norm{\z_i - \w_j}^2} \right ) \\
                    & + \frac{1}{|\mathcal{Q}|} \sum_{i \in \mathcal{Q}} \sum_{k=1}^K q_{ik} \log q_{ik}
                \end{split}
            \end{align}
            Now, we can solve our problem approximately by alternating two sub-steps: 
            one sub-step optimizes w.r.t classifier weights $\mathbf{W}$ while auxiliary variables $\q$ are fixed; another sub-step fixes $\mathbf{W}$ and update $\q$.
            \begin{itemize}
                \item $\mathbf{W}$-update: Omitting the terms that do not involve $\mathbf{W}$, Eq. \eqref{eq:update_proof_1} reads:
    		        \begin{align}
    		            \begin{split}\label{eq:ce_reg_1}
    		            &\underbrace{\frac{\lambda}{2|\mathcal{S}|} \sum_{i \in \mathcal{S}} y_{ik} \norm{\z_i - \w_k}^2 + \frac{1+\alpha}{2|\mathcal{Q}|} \sum_{i \in \mathcal{Q}} q_{ik} \norm{\z_i - \w_k}^2}_{\mathcal{C}: \text{convex}} \\
    		            +& \underbrace{\frac{\lambda}{|\mathcal{S}|} \sum_{i \in \mathcal{S}} \log \left ( \sum_{j=1}^K e^{-\frac{1}{2}\norm{\z_i - \w_j}^2} \right ) + \frac{1+\alpha}{|\mathcal{Q}|} \sum_{i \in \mathcal{Q}} \log \left ( \sum_{j=1}^K e^{-\frac{1}{2}\norm{\z_i - \w_j}^2} \right )}_{\Bar{\mathcal{C}}: \text{non-convex}}
    		            \end{split}
    		        \end{align}
    		        One can notice that objective (\ref{eq:update_proof_1}) is not convex w.r.t $\w_k$. Actually, it can be split into convex and non-convex parts as in Eq. \eqref{eq:ce_reg_1}. Thus, we cannot simply set the gradients to 0 to get the optimal $\w_k$. The non-convex part can be linearized at current solution $\w_k^{(t)}$ as follows:
    		        \begin{align}
    		            \Bar{\mathcal{C}}(\w_k) &\approx
    		            \Bar{\mathcal{C}}(\w_k^{(t)}) + \frac{\partial \Bar{\mathcal{C}}}{\partial \w_k} (\w_k^{(t)})^T (\w_k - \w_k^{(t)}) \nonumber \\
    		            & \ceq \frac{\lambda}{|\mathcal{S}|}\sum_{i \in \mathcal{S}} p_{ik}^{(t)} (\z_i - \w_k^{(t)})^T \w_k + \frac{1+\alpha}{|\mathcal{Q}|}\sum_{i \in \mathcal{Q}} p_{ik}^{(t)} (\z_i - \w_k^{(t)})^T \w_k
    		        \end{align}
    		       Where $\ceq$ stands for "equal, up to an additive constant". By adding this linear term to the convex part $\mathcal{C}$, we can obtain a strictly convex objective in $\w_k$, whose gradients w.r.t $\w_k$ read:
    		        \begin{align}
    		            \frac{\partial (\ref{eq:ce_reg_1})}{\partial \w_k} \approx & \frac{\lambda}{|\mathcal{S}|} [\sum_{i \in \mathcal{S}} y_{ik}(\z_i - \w_k) + p_{ik}^{(t)}(\z_i - \w_k^{(t)})] \ + \nonumber \\ 
    		            & \frac{1+\alpha}{|\mathcal{Q}|} [\sum_{i \in \mathcal{Q}} q_{ik}(\z_i - \w_k) + p_{ik}^{(t)}(\z_i - \w_k^{(t)})] 
    		        \end{align}
    		        Note that the approximation we do here is similar in spirit to concave-convex procedures, which are well known in optimization. Concave-convex techniques proceed as follows: for a function in the form of a sum of a concave term and a convex term, the concave part is replaced by its first-order approximation, while the convex part is kept as is. The difference here is that the part that we linearize in Eq. \eqref{eq:ce_reg_1} is not concave. 
    		        Setting the gradients above to 0 yields the optimal solution for the approximate objective. 
    		        
    		        Another solution to obtain a strictly convex objective would have been to discard the non-convex part $\Bar{\mathcal{C}}$. Very interestingly, in this case, one would recover $\w_k$ updates that would very much resemble the prototype updates of the K-means clustering algorithm (slightly modified to take into account the fact that for support points in $\mathcal{S}$ have labels). Note that the link between regularized K-means and mutual information maximization has been extensively explored in \cite{jabi}. Of course, in this case, the approximation is not as good as the first-order approximation above, and we found that omitting the non-convex part might decrease the performances significantly. 

                \item $\q$-update: With weights $\mathbf W$ fixed, the objective is convex w.r.t auxiliary variables $\q_i$ (sum of linear and convex functions) and the simplex 
                    constraints are affine. Therefore, one can minimize this constrained convex problem for each $\q_i$ by solving the
                    Karush-Kuhn-Tucker (KKT) conditions\footnote{Note that strong duality holds since the objective is convex and the simplex constraints are affine. This
                    means that the solutions of the (KKT) conditions minimize the objective.}. The KKT conditions yield closed-form solutions for both
                    primal variable $\q_i$ and the dual variable (Lagrange multiplier) corresponding to simplex constraint $\sum_{j=1}^K q_{ij}=1$.
                    Interestingly, the negative entropy of auxiliary variables, i.e., $\sum_{k=1}^K q_{ik} \log q_{ik}$,  which appears in the penalty term, handles implicitly non-negativity constraints $\q_i \geq 0$. In fact, this negative entropy acts as a barrier function, restricting the domain of each $\q_i$ to non-negative values, which avoids extra dual variables and Lagrangian-dual inner iterations for constraints $\q_i \geq 0$. As we will see, the closed-form solutions of the KKT conditions satisfy these non-negativity constraints, without explicitly imposing them. In addition to non-negativity, for each point $i$, we need to handle probability simplex constraints $\sum_{k=1}^K q_{ik}=1$. Let $\gamma_i \in \mathbb{R}$ denote the Lagrangian multiplier corresponding to this constraint. The KKT conditions correspond to setting the following gradient of the Lagrangian function to zero, while enforcing the simplex constraints:   
                    %a Lagrangian multiplier $\gamma_i \in \mathbb{R}$ for each constraint. Omitting terms that do not depend on $\q$, the Lagrangian function we need to optimize reads:
                    %                \begin{align}\label{eq:q_update}
                    %                    \mathcal{L} = \sum_{k=1}^K \widehat{q_k} \log \widehat{q_k} - \frac{1+\alpha}{|\mathcal{Q}|} \sum_{i \in \mathcal{Q}} \sum_{k=1}^K q_{ik} \log(p_{ik}) + \frac{1}{|\mathcal{Q}|} \sum_{i \in \mathcal{Q}} \sum_{k=1}^K q_{ik} \log q_{ik} + \sum_{i \in \mathcal{Q}}  \gamma_i (\sum_{k=1}^K q_{ik}-1)
                    %                \end{align}
                    %                First notice that $\mathcal{L}$ is convex in each  $q_{ik}$. Thus, setting the gradients to 0 will give us the optimal solution.
                    \begin{align}
    		            \frac{\partial (\ref{eq:mi_adm})}{\partial q_{ik}} &= - \frac{1+\alpha}{|\mathcal{Q}|} \log p_{ik} + \frac{1}{|\mathcal{Q}|} (\log \widehat{q_k} + 1) + \frac{1}{|\mathcal{Q}|}(\log q_{ik} + 1) + \gamma_i \\
    		            &= \frac{1}{|\mathcal{Q}|} \left (\log (\frac{q_{ik} \widehat{q_k}}{p_{ik}^{1+\alpha}}) + 2 \right ) + \gamma_i
    		        \end{align}
 %   		        Setting the gradients to 0 yields the following update:
                    This yields: 
    		        \begin{align}
    		        \label{first-experession-qi}
    		            q_{ik} = \frac{p_{ik}^{1+\alpha}}{\widehat{q_k}}e^{-(\gamma_i |\mathcal{Q}|+2)}
    		        \end{align}
    		        Applying simplex constraint $\sum_{j=1}^K q_{ij}=1$ to \eqref{first-experession-qi}, Lagrange multiplier $\gamma_i$ verifies:
    		        \begin{align}
    		        \label{condition_wrt_gamma}
    		        %   \gamma_i =  \frac{1}{|\mathcal{Q}|}(\log(\sum_{j=1}^K q_{ij})-2)
    		           e^{-(\gamma_i |\mathcal{Q}|+2)} = \frac{1}{\displaystyle \sum_{j=1}^K \frac{p_{ij}^{1+\alpha}}{\widehat{q_j}}}
    		        \end{align}
    		        Hence, plugging \eqref{condition_wrt_gamma} in \eqref{first-experession-qi} yields:
    		        \begin{align}
    		        \label{second-experession-qi}
    		            q_{ik} = \frac{\displaystyle \frac{p_{ik}^{1+\alpha}}{\widehat{q_k}}}{\displaystyle \sum_{j=1}^K \frac{p_{ij}^{1+\alpha}}{\widehat{q_j}}}
    		        \end{align}
    		        Using the definition of $\widehat{q_k}$, we can decouple this equation:
    		        \begin{align}\label{eq:update_proof_2}
    		            \widehat{q_k} = \frac{1}{|\mathcal{Q}|} \sum_{i \in \mathcal{Q}} q_{ik} \propto \sum_{i \in \mathcal{Q}} \frac{p_{ik}^{1+\alpha}}{\widehat{q_k}}
    		        \end{align}
    		        which implies:
    		        \begin{align}
    		            \widehat{q_k} \propto \left (\sum_{i \in \mathcal{Q}} p_{ik}^{1+\alpha} \right )^{1/2}
    		        \end{align}
    		        Plugging this back in Eq. \eqref{second-experession-qi}, we get:
    		        \begin{equation}
    		        \label{third-experession-qi}
    		            q_{ik} \propto \frac{p_{ik}^{1+\alpha}}{\left (\displaystyle \sum_{i \in \mathcal{Q}} p_{ik}^{1+\alpha} \right )^{1/2}}
    		        \end{equation}
    		        Notice that $q_{ik} \geq 0$, hence the solution fulfils the positivity constraint of the original problem.
            \end{itemize}
        \end{proof}

\section{TIM algorithms}\label{sec:algorithms}
    
    In this section, we provide the pseudo-code for TIM's inference stage (both TIM-GD and TIM-ADM).
        \begin{algorithm}
            \SetKwInOut{Input}{Input}\SetKwInOut{Output}{output}
            \SetAlgoLined
            \Input{ Pre-trained encoder $f_{\bm{\phi}}$, Task $\{\mathcal{S}, \mathcal{Q}\}$, \# iterations $iter$, Temperature $\tau$, Weights $\{\lambda, \alpha\}$} \vspace{0.5em}
            $\z_i \leftarrow \frac{f_{\bm{\phi}}(\x_i)}{\norm{f_{\bm{\phi}}(\x_i)}_2}$ , $i \in \mathcal{S} \cup \mathcal{Q}$ \\
            $\w_k \leftarrow \frac{\sum_{i \in \mathcal{S}}y_{ik} \z_i}{\sum_{i \in \mathcal{S}}y_{ik}}$ , $k \in \{1, \dots, K\}$ \\
            \For{$i\leftarrow 0$ \KwTo $iter$}{
                $p_{ik} \leftarrow \exp\left(-\frac{\tau}{2}\norm{\w_k - \z_i}^2\right)$ , $i \in \mathcal{S} \cup \mathcal{Q}$ \\
                $p_{ik} \leftarrow \frac{p_{ik}}{\sum_{l=1}^K p_{il}}$ \\ 
                $q_{ik} \leftarrow \frac{p_{ik}^{1+\alpha}}{\left ( \sum_{i \in \mathcal{Q}} p_{ik}^{1+\alpha} \right )^{1/2}}$ , $i \in \mathcal{Q}$\\
                $q_{ik} \leftarrow \frac{q_{ik}}{\sum_{l=1}^K q_{il}}$ \\
      	        $\w_k \leftarrow \frac{\displaystyle\frac{\lambda}{1+\alpha} \displaystyle \sum\limits_{i \in \mathcal{S}} \left ( y_{ik} \,\z_i + p_{ik} (\w_k - \z_i) \right ) +  \frac{|\mathcal{S}|}{|\mathcal{Q}|}  \displaystyle\sum\limits_{i \in \mathcal{Q}} \left ( q_{ik} \z_i + p_{ik} (\w_k - \z_i) \right )}{\displaystyle\frac{\lambda}{1+\alpha} \displaystyle\sum\limits_{i \in \mathcal{S}} y_{ik} +  \frac{|\mathcal{S}|}{|\mathcal{Q}|} \sum\limits_{i \in \mathcal{Q}} q_{ik}}$
                
            }
            \KwResult{Query predictions $\hat{y}_{i} = \argmax_{k} p_{ik}$ , $i \in \mathcal{Q}$}
            \caption{TIM-ADM}
            \end{algorithm}
            \begin{algorithm}
            \SetKwInOut{Input}{Input}\SetKwInOut{Output}{output}
            \SetAlgoLined
            \Input{ Pre-trained encoder $f_{\bm{\phi}}$, Task $\{\mathcal{S}, \mathcal{Q}\}$, \# iterations $iter$, Temperature $\tau$, Weights $\{\lambda, \alpha\}$, Learning rate $\gamma$} \vspace{0.5em}
            $\z_i \leftarrow \frac{f_{\bm{\phi}}(\x_i)}{\norm{f_{\bm{\phi}}(\x_i)}_2}$ , $i \in \mathcal{S} \cup \mathcal{Q}$ \\
            $\w_k \leftarrow \frac{\sum_{i \in \mathcal{S}}y_{ik} \z_i}{\sum_{i \in \mathcal{S}}y_{ik} }$ , $k \in \{1, \dots, K\}$ \\
            \For{$i\leftarrow 0$ \KwTo $iter$}{
                $p_{ik} \leftarrow \exp\left(-\frac{\tau}{2}\norm{\w_k - \z_i}^2\right)$ \\
                $p_{ik} \leftarrow \frac{p_{ik}}{\sum_{l=1}^K p_{il}}$ \\ 
      	        $\w_k \leftarrow \w_k - \gamma \nabla_{\w_k} \mathcal{L}_{\textrm{TIM}}$
                
            }
            \KwResult{Query predictions $\hat{y}_{i} = \argmax_{k} p_{ik}$ , $i \in \mathcal{Q}$}
            \caption{TIM-GD}
            \end{algorithm}

\section{Summary figure}    
            	    
    We hereby provide a summarizing figure of the training and inference stages used in TIM.
	\begin{figure}[H]
        \centering
        \includegraphics[scale=0.22]{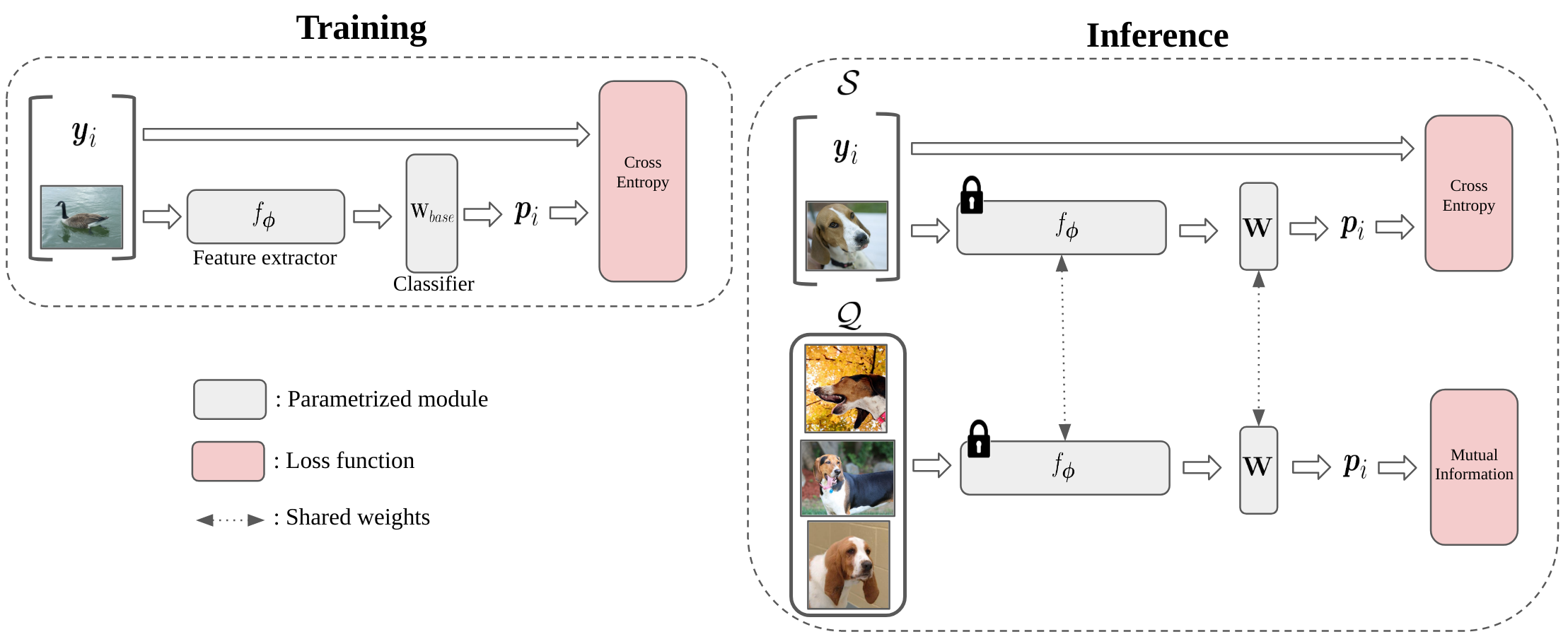}
        \caption{Outline of TIM framework (best viewed in color). First, the feature extractor is trained with the standard cross-entropy on the base classes. Then, it is kept fixed at inference and weights $\mathbf{W}$ are optimized for by minimizing the cross-entropy on the support set $\mathcal{S}$, while maximizing the mutual information between features and predictions on the query set $\mathcal{Q}$.}
        \label{fig:pipeline}
    \end{figure}

\section{Details of ADM ablation}\label{sec:details_adm_ablation}

In \autoref{tab:adm_ablation_details}, we provide the $\mathbf{W}$ and $\q$ updates for each configuration of the TIM-ADM ablation study, whose results were presented in \autoref{tab:ablation_effect_terms}. The proof for each of these updates is very similar to the proof of Proposition \ref{prop:mi_adm_solution} detailed in \autoref{sec:proofs}. Therefore, we do not detail it here.  

\begin{table}[H]
    \centering
    \begin{tabular}{ccc}
         \textbf{Loss} & $\w_k$ \textbf{update} & $q_{ik}$ \textbf{update} \\
         \toprule
         $\mathrm{CE}$ & $\frac{\smashoperator[r]\sum_{i \in \mathcal{S}} y_{ik} \z_i }{\smashoperator[r]\sum_{i \in \mathcal{S}} y_{ik}}$ & N/A \\
         \midrule
        %  $\mathcal{D} + \mathcal{H}$ & $\frac{\smashoperator[r]\sum_{i \in \mathcal{Q}} q_{ik} \z_i}{\smashoperator[r]\sum_{i \in \mathcal{Q}} q_{ik}}$ & - \\
        %  \midrule
         $\mathrm{CE} + \widehat{\mathcal{H}}(Y_\mathcal{Q}|X_\mathcal{Q})$ & - & $\propto p_{ik}^{1+\alpha}$ \\
         \midrule
         $\mathrm{CE} - \widehat{\mathcal{H}}(Y_\mathcal{Q})$ & - & $\propto \frac{\smashoperator[r]p_{ik}}{\left ( \smashoperator[r]\sum_{i \in \mathcal{Q}}p_{ik} \right )^{1/2}}$ \\
         \midrule
         $\mathrm{CE} - \widehat{\mathcal{H}}(Y_\mathcal{Q}) + \widehat{\mathcal{H}}(Y_\mathcal{Q}|X_\mathcal{Q})$ & - & - \\
         \bottomrule
    \end{tabular}
    \caption{The $\bf W$ and $\q$-updates for each case of the ablation study. "-" refers to the updates in Proposition \ref{prop:mi_adm_solution}. "NA" refers to non-applicable.}
    \label{tab:adm_ablation_details}
\end{table}

\end{document}